\documentclass{article}

\usepackage{microtype}
\usepackage{graphicx}
\usepackage{subcaption}
\usepackage{booktabs} 

\usepackage{hyperref}

\usepackage[preprint]{icml2026}

\usepackage{amsmath}
\usepackage{amssymb}
\usepackage{mathtools}
\usepackage{amsthm}

\usepackage{Definitions}
\usepackage[capitalize,noabbrev]{cleveref}

\theoremstyle{plain}
\newtheorem{theorem}{Theorem}[section]
\newtheorem{proposition}[theorem]{Proposition}

\newtheorem{corollary}[theorem]{Corollary}
\theoremstyle{definition}
\newtheorem{definition}[theorem]{Definition}
\newtheorem{assumption}[theorem]{Assumption}
\theoremstyle{remark}

\usepackage[textsize=tiny]{todonotes}

\usepackage{xcolor} 

\icmltitlerunning{Generating Robust Portfolios of Optimization Models using Large Language Models}

\begin{document}

\twocolumn[
  \icmltitle{Generating Robust Portfolios of Optimization Models \\ using Large Language Models}



  \icmlsetsymbol{equal}{*}

  \begin{icmlauthorlist}
    \icmlauthor{Eleni Straitouri}{mpi}
    \icmlauthor{Cheol Woo Kim}{harvard}
    \icmlauthor{Milind Tambe}{harvard}
  \end{icmlauthorlist}

  \icmlaffiliation{mpi}{Max Planck Institute for Software Systems, Kaiserslautern, Germany}
  \icmlaffiliation{harvard}{Harvard University, Cambridge, United States}

  \icmlcorrespondingauthor{Eleni Straitouri}{estraitouri@mpi-sws.org}

  \icmlkeywords{Optimization Modeling, Portfolios, LLMs}

  \vskip 0.3in
]



\printAffiliationsAndNotice{}  

\begin{abstract}

Mathematical optimization is a powerful tool for structured decision-making across domains such as resource allocation and planning. Formulating optimization models faithful to reality, though, remains a significant bottleneck as it typically demands both domain expertise and optimization knowledge that are often scarce. Recent advances in large language models (LLMs) promise to bridge this gap, enabling the generation of candidate optimization models from natural language descriptions. However, there is no guarantee that any single LLM-generated model is reliable, and existing approaches that output only one model are therefore risky.
In this work, we propose a novel algorithm that generates a portfolio of optimization models, designed to be robust to the limitations of LLMs. Our method exploits the observation that a single LLM can play two distinct roles---as a stochastic generator and as a reasoning evaluator---and proposes a unified framework that leverages both capabilities in a complementary manner. We provide theoretical guarantees showing that, as long as either the generator or the evaluator is well-aligned with human preferences, the portfolio is guaranteed to contain high-quality candidates, enabling a principled human-in-the-loop process in which a decision-maker can review multiple candidates before committing to one. We further validate our approach empirically, demonstrating strong performance across a range of optimization modeling tasks.

\end{abstract}

\section{Introduction}
Formalizing a structured decision-making task as a mathematical optimization problem plays a pivotal role in designing optimal decision policies in domains such as resource allocation and planning~\citep{brill1979use, katoh1998resource, vercellis2011business}. 
However, defining an optimization model that accurately reflects all the real-world     requirements and constraints of the decision-making task can be rather challenging; 
typically, it requires not only exhaustive manual tuning, but also a combination of domain expertise and deep optimization knowledge. 
%
%
%
%
%

To address this challenge, there has been a growing interest in leveraging Large Language Models (LLMs) to automate the definition of optimization models given a task description in natural language.
Recent lines of work typically focus on either automating the entire model definition~\citep{yang2023large, yang2024optibench, ahmaditeshnizi24optimus, astorga2024autoformulation, zhang2024solving, jiang2024llmopt,ahmed2024lm4opt, huang2025orlm, xiao2025survey} or a partial model definition limited to the design of the objective (reward) function  
in 
~\citep{icarte2022reward,yu2023language,shinn2023reflexion,ma2024eureka,hwang2024promptable,xie2024textreward,behari2024decision, verma2025balancing, sun2025large}.
%

Prior work, though, typically proposes computationally-heavy approaches, requiring additional training or fine-tuning of language models, with the objective to generate a single optimization model, while lacking guarantees on its quality. 
In this work, we introduce a lightweight, training-free algorithm to generate a portfolio of optimization models with guarantees over the quality of generated models while being robust to 
limitations of language models.
To achieve this, our algorithm leverages a dual perspective on the capabilities of language models that we describe next.

Language models can operate as stochastic \textit{generators}~\citep{verma2025balancing, cardenoso2025leveraging} that can provide diverse models through repeated stochastic sampling, accounting for different trade-offs present in the optimization task at hand. 
%
%
In addition, language models, can also operate as judges or reasoning evaluators~\citep{verma2025balancing, saccon2025automated} arguing about the quality of given inputs, based on their world knowledge and reasoning capabilities.
%
By unifying these modes of operation, our approach leverages their complementary strengths to provide robustness guarantees over the quality of models comprising our portfolios, 
%
%
allowing a decision-maker to review multiple high quality candidates before committing to
one.  

%
%



\xhdr{Contributions} We propose a method that first uses 
a language model, namely a \textit{generator},  to generate a distribution of candidate optimization models through repeated stochastic sampling. 
Next, our method uses a judge agent, namely an \textit{evaluator} to rank these optimization models based on how well they align with the optimization task description in natural language.
We construct a portfolio comprising the candidate optimization models with the highest evaluator ranks that have a total generation probability above a user-specified threshold. 
We show that in this way our portfolio guarantees to include high quality candidates   if either of the generator or the evaluator satisfy a certain level of alignment with respect to human preferences. We empirically verify the strong performance of our portfolio in experiments with both synthetic and real data.


\section{Building a Portfolio through a Generator and Evaluator}
\label{sec:portfolio}
Let $d$ be the natural language description of an optimization problem,
$g$ be a stochastic generator model and $e$ an evaluator model.

\xhdr{Generator} %
Given a natural language description $d\in \Dcal$, where $\Dcal$ is the space of any natural language prompt, we consider the generator $g$ as a probability distribution $p$ over the space of candidate optimization models $\Ocal$.%
~\footnote{We assume that $\Ocal = \Ocal_v \cup \bot$, where $\Ocal_v$ is the space of text representations of any valid optimization model, and $\bot = \Dcal \setminus \Ocal_{v}$. Note that $\Ocal$ can be finite or infinite.}  
%
%


\xhdr{Evaluator} Given the optimization problem description $d$, we consider the evaluator as a ranking policy $\pi_{e}$  over the space of candidate optimization models $o \in \Ocal$, inducing a ranking 
\begin{equation}
    \pi_{e}(d) = (o_{(1)^e}, o_{(2)^e}, \dots),
\end{equation}
where the subscript ${(\cdot)}$ denotes that rank of candidate $o$. 
We assume that the lower the rank, the better the candidate given the description $d$, according to world knowledge of the evaluator, and ties are broken randomly.

We use the probability distribution $p$ induced by the generator values $g(d)$ and the ranking $\pi_{e}(d)$ induced by the evaluator $e$ to construct our portfolio $\Pcal$ of candidate optimization models  as follows 
\begin{align}~\label{eq:portfolio}
    \Pcal(d; \alpha) = \left \{  o_{(i)^e}  \right \}_{i=1}^{k^*(\alpha)},    
\end{align}
where $\alpha \in (0,1)$ is a user-defined parameter and 
\begin{align*}~\label{eq:portfolio-size}
    k^*(\alpha) = \inf \left \{ k \in \NN : \sum_{i=1}^{k} p(o_{(i)^e}) \geq 1 - \alpha \right \}.
\end{align*}

By constructing our portfolio in this way, we make sure that, for a small enough $\alpha$ our portfolio will include candidate models either because they have a low (good) evaluator rank or because they have a high probability of being generated
%
This means that our portfolio will include high-quality models as long as either the evaluator ranks them low, or the generator produces them with high probability, \ie, whenever, either of them is well-aligned with human preferences. 
Motivated by this observation, in what follows, we prove that our portfolio guarantees to include high-quality candidates if either of the generator or evaluator are human-aligned.

\section{Robust Portfolio Generation}
\label{sec:robustness}
We begin by defining human-alignment of the generator and evaluator under the following assumption
\begin{assumption}
Given the natural description $d$ there is a human ranking policy $\pi^*$, inducing a ranking $\pi^{*}(d)$ over $o \in \Ocal$, with
\begin{align}
    \pi^{*}(d) =( o_{(1)^{*}} , o_{(2)^{*}} , \dots ),
\end{align}
where the lower the rank, the higher the quality of the candidate optimization model $o$ according to human preferences.   
\end{assumption}

Based on the above, we provide the following definitions.
\begin{definition}[Evaluator Alignment]~\label{def:evaluator-alignment} An evaluator is human-aligned given $d$, if the induced ranking 
\begin{equation*}
    \pi_{e}(d) = \pi^{*}(d).
\end{equation*}
\end{definition}
\begin{definition}[Generator Alignment]~\label{def:generator-alignment} 
    A generator model inducing a probability $p$ over candidate optimization models $o \in \Ocal$ is human-aligned given $d$, if for any $o_{(i)^{*}}, o_{(j)^{*}} \in \pi^{*}(d)$ with $i \leq j$, it holds 
    \begin{equation}
        p(o_{(i)^{*}}) \geq p(o_{(j)^{*}}).
    \end{equation}
\end{definition}
Intuitively, a human-aligned generator generates candidates judged as
high quality according to human preferences with high probability.   

We show that under either evaluator or generator alignment our portfolio is guaranteed to include high quality candidates or more formally achieve positive \textit{coverage}, where we define coverage as follows.
\begin{definition}[Portfolio coverage]
    A  portfolio $\Pcal$ of $k$ candidates has coverage
    \begin{align}
        c(\Pcal) = \frac{\sum_{i=1}^k \II\{o_{(i)^*} \in \Pcal\}}{k}
    \end{align}
\end{definition}

Under evaluator alignment and any generator, we prove:~\footnote{Proofs are deferred to Appendix~\ref{app:proofs}.}  

\begin{corollary}[Evaluator robustness]~\label{cor:eval-robustness}
    Assume an aligned evaluator under Definition~\ref{def:evaluator-alignment} for a description $d$. For any value of $\alpha \in (0,1)$ and any generator, the portfolio $\Pcal(d; \alpha)$ constructed using Eq.~\ref{eq:portfolio} has coverage $c(\Pcal(d ; \alpha)) = 1$. 
\end{corollary}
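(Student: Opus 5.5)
The plan is to show that the portfolio is exactly the set of the top $k$ candidates under the human ranking, where $k = k^*(\alpha)$ is the portfolio size. The coverage definition then forces $c(\Pcal(d;\alpha)) = 1$.

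\textbf{Finiteness of the portfolio.} We need $k^*(\alpha)$ to be a well-defined natural number. Since $p$ is a probability distribution over $\Ocal$ and the evaluator ranking $\pi_e(d)$ enumerates all of $\Ocal$, the partial sums $\sum_{i=1}^{k} p(o_{(i)^e})$ are nondecreasing in $k$. They converge to $\sum_{o\in\Ocal} p(o) = 1$ by countable additivity; here we implicitly take $\Ocal$ to be countable, which is needed anyway for a ranking $(o_{(1)^e}, o_{(2)^e},\dots)$ to exist. Because $1-\alpha < 1$, some finite $k$ satisfies $\sum_{i=1}^k p(o_{(i)^e}) \ge 1-\alpha$. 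Hence the infimum is attained and $k := k^*(\alpha) \in \NN$.

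\textbf{Identifying the portfolio with the human top-$k$.} By Definition~\ref{def:evaluator-alignment}, $\pi_e(d) = \pi^*(d)$ as sequences, so $o_{(i)^e} = o_{(i)^*}$ for every $i$. Then by Eq.~\ref{eq:portfolio},
\[
\Pcal(d;\alpha) = \{o_{(i)^e}\}_{i=1}^{k} = \{o_{(i)^*}\}_{i=1}^{k}.
\]
Thus each indicator $\II\{o_{(i)^*}\in\Pcal\}$ with $i\le k$ equals $1$. Since the ranking lists distinct candidates, the portfolio has exactly $k$ elements, matching the $k$ in the coverage definition. Therefore $c(\Pcal) = k/k = 1$, independently of $\alpha$ and of the generator $p$.

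\textbf{Main obstacle: ties.} The only delicate point is that the evaluator breaks ties randomly, so ``$\pi_e(d)=\pi^*(d)$'' must be read as equality of the realized orderings. Under that reading, the identification in the previous step is immediate. If instead the rankings were only required to agree up to ties, I would argue that the top-$k$ set still coincides with some admissible tie-broken human top-$k$, and interpret $o_{(i)^*}$ with respect to that tie-breaking. I expect this to be a definitional remark rather than a real difficulty.
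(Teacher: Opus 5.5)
Your proof is correct and follows the same route as the paper: alignment gives $o_{(i)^e}=o_{(i)^*}$ for every $i$, so the portfolio is exactly the human top-$k^*(\alpha)$ set, and the coverage is $k^*(\alpha)/k^*(\alpha)=1$. Your extra check that $k^*(\alpha)$ is a finite natural number (the partial sums increase to $1>1-\alpha$) is a detail the paper leaves implicit, so it is a useful addition rather than a different approach.
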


Under generator alignment and any evaluator, we prove

\begin{proposition}~\label{prop:coverage-bound}
    Assume an aligned generator under Definition~\ref{def:generator-alignment} for a description $d$. For any value of $\alpha \in (0,1/2)$ and any evaluator, any non-empty portfolio $\Pcal(d ; \alpha)$ constructed using Eq.~\ref{eq:portfolio}  has coverage
    \begin{align*}
        c(\Pcal(d ; \alpha)) > \frac{1 - 2\alpha}{k^{*}(\alpha)} > 0.
    \end{align*}
\end{proposition}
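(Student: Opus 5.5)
The plan is to show that the portfolio must contain at least one of the top-$k^*(\alpha)$ candidates under the human ranking. Because the claimed bound has numerator $1-2\alpha<1$, a single such hit is enough. Write $k^* = k^*(\alpha)$, which is finite because the portfolio is assumed non-empty. Let $T = \{o_{(1)^*},\dots,o_{(k^*)^*}\}$ be the human top-$k^*$ set. By definition of coverage, $c(\Pcal(d;\alpha)) = |\Pcal(d;\alpha)\cap T|/k^*$. So it suffices to prove $|\Pcal(d;\alpha)\cap T|\ge 1$, since then $c \ge 1/k^* > (1-2\alpha)/k^*$. The final inequality $(1-2\alpha)/k^*>0$ is immediate from $\alpha<1/2$.

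I would prove $|\Pcal\cap T|\ge 1$ by contradiction, so suppose $\Pcal(d;\alpha)\cap T=\emptyset$. By construction in Eq.~\ref{eq:portfolio}, $\Pcal$ consists of $k^*$ distinct candidates $o_{(1)^e},\dots,o_{(k^*)^e}$ with total generator mass $\sum_{o\in\Pcal}p(o)\ge 1-\alpha$. Every element of $\Pcal$ lies outside $T$, so it has human rank strictly greater than $k^*$. By generator alignment (Definition~\ref{def:generator-alignment}), each such element therefore has probability at most $p(o_{(k^*)^*}) = \min_{o\in T}p(o)$.

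Next, pair the $k^*$ elements of $\Pcal$ with the $k^*$ elements of $T$ via any bijection. This gives $p(T)=\sum_{o\in T}p(o)\ge \sum_{o\in\Pcal}p(o)\ge 1-\alpha$. Since $\Pcal$ and $T$ are disjoint, the total mass satisfies
\begin{equation*}
1 \ge p(T)+p(\Pcal) \ge 2(1-\alpha) > 1,
\end{equation*}
where the strict inequality uses $\alpha<1/2$. This is a contradiction, so $\Pcal\cap T\neq\emptyset$, and the bound follows.

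The only delicate step is the comparison $p(T)\ge p(\Pcal)$, because it needs both sets to have exactly $k^*$ distinct elements. For $\Pcal$ this holds because a ranking lists distinct candidates. For $T$ it holds because $\Ocal$ has at least $k^*$ elements, as witnessed by $\Pcal$ itself. Ties in the human ranking cause no trouble: generator alignment is stated in terms of rank positions, so the inequality $p(o_{(i)^*})\ge p(o_{(j)^*})$ holds for any fixed tie-breaking.
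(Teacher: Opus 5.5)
Your proof is correct, and it runs on the same exchange argument as the paper. Generator alignment lets you compare portfolio members outside the human top-$k^*$ set $T$ with members of $T$ outside the portfolio, and the generator mass outside $\Pcal$ is at most $\alpha$. You apply this only to the empty-overlap case, by contradiction. The paper keeps a general overlap $\Xcal=\Pcal\cap T$, derives the mass bound $\sum_{o\in\Xcal}p(o)\ge 1-2\alpha$, and then converts mass to a count via $\sum_{o\in\Xcal}p(o)<|\Xcal|$.

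The paper's route gives a more informative intermediate fact: the captured top candidates carry at least $1-2\alpha$ of the generator mass. The final count bound then discards this. Your route makes explicit that coverage is a multiple of $1/k^*$ and $1-2\alpha<1$, so the proposition is equivalent to $c(\Pcal)\ge 1/k^*$, i.e.\ to $\Pcal\cap T\neq\emptyset$. It also sidesteps the paper's loose strict inequality $\sum_{o\in\Xcal}p(o)<|\Xcal|$, which is an equality when $\Xcal$ is a single candidate of mass one. The paper's conclusion survives that case only because $1-2\alpha<1$.

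One small correction: finiteness of $k^*$ does not follow from non-emptiness of the portfolio, which is automatic since $k^*\ge 1$. It follows from $\alpha>0$ together with the evaluator ranking enumerating the support of $p$, so that the partial sums tend to $1$. The paper also leaves this implicit.
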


\section{Experiments}
\label{sec:experiments}
We begin by simulating the generator and evaluator under different levels of human-alignment in a synthetic data setup, where we i) empirically verify the robustness of our %
portfolio, and ii) investigate the effect of human-alignment on %
the portfolio coverage and size. 
We proceed with a real data implementation of our portfolios on optimization modeling, where we show that the optimization models comprising our portfolios are qualitatively superior compared to optimization models provided by randomly sampled portfolios.  

\subsection{Simulated Portfolios}
\label{subsec:simulation}
In our synthetic data setup, we consider a finite space of candidate optimization models 
$\Ocal$, such that $|\Ocal| = K$, for $K \in \{10, 20, 50 , 100\}$, with human ranking $(1, 2, ..., K)$.
%
We simulate several generator and evaluator types, each with a different level of human alignment as described below.   
For each  generator-evaluator pair, we use Eq.~\ref{eq:portfolio} to construct a portfolio for each $\alpha$ value from $0$ to $1$ with step $0.02$ and repeat the experiment with $40$ different seeds. 

\xhdr{Generators} We implement the following generator types (refer to Appendix~\ref{app:deatils-simulation} for more details).
\begin{itemize}
\item \textsc{Aligned}. A generator satisfying Definition~\ref{def:generator-alignment}.
\item \textsc{Weakly Aligned}.  A generator violating  Definition~\ref{def:generator-alignment}, for which though   Proposition~\ref{prop:coverage-bound} holds for any portfolio of $K/2$ generated candidates.
\item \textsc{Uniform}. A generator where each generated candidate $i \in [K]$ has probability $p(i) = 1/K$.
\item \textsc{Misaligned}. A reversely aligned generator. 
\end{itemize}

\xhdr{Evaluators} We characterize each evaluator by the fraction of incorrectly ranked candidates, which we denote as the evaluator error $\epsilon = \frac{\sum_{i=1}^K \II\{o_{(i)^{*} }\neq o_{(i)^{e}} \}}{K}$. We implement the evaluator for $\epsilon \in \{0, 0.3, 0.5, 0.7, 1\}$, where $\epsilon = 0$ characterizes a human-aligned evaluator with $\pi^{*} = \pi_{e}$, and $\epsilon = 1$, an evaluator $e$ such that $\forall i \in [K], o_{(i)^e} = o_{(K + 1 - i)^*}$.

\begin{figure}[t]
    \centering
    \includegraphics[width=.75\linewidth]{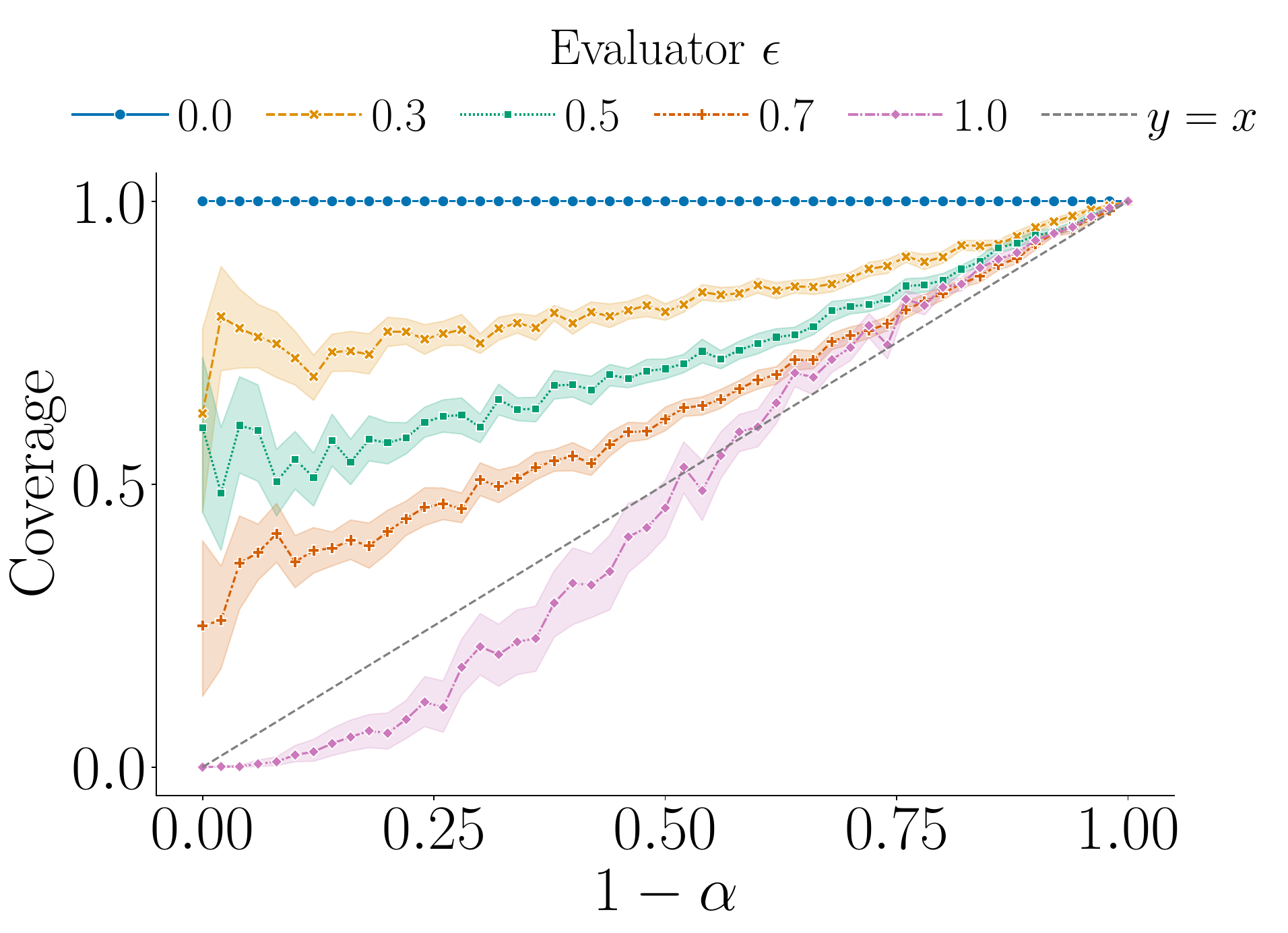}
    \caption{Portfolio mean coverage against the value of $1-\alpha$ for the \textsc{Weakly Aligned} generator paired with each evaluator for $K=100$. The mean is over $40$ iterations and shaded areas represent $95\%$ confidence intervals.}
    \label{fig:weakly-aligned-coverage-K100}
\end{figure}
\begin{figure}[t]
    \centering
    \includegraphics[width=.725\linewidth]
{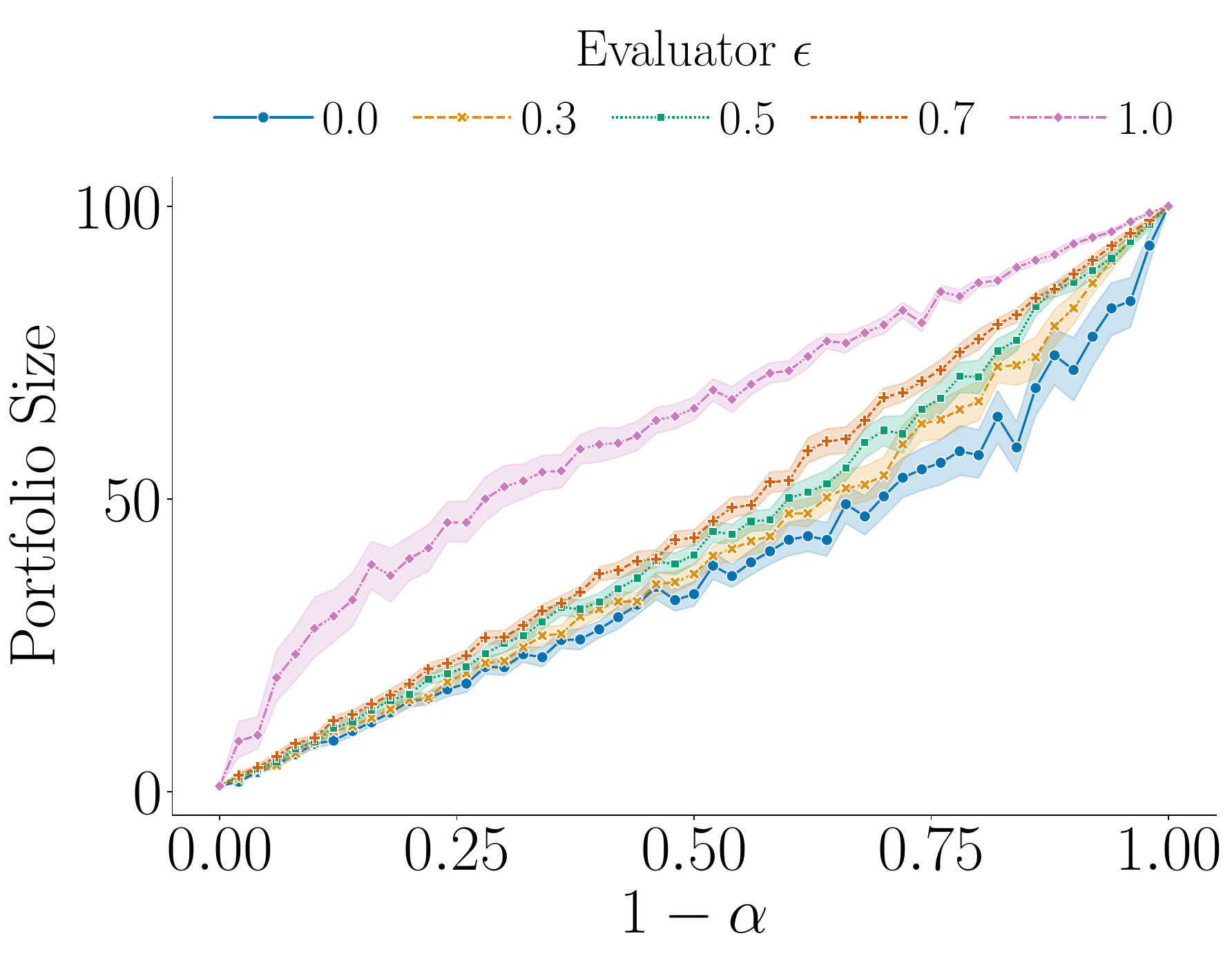}
    \caption{Portfolio mean size against the value of $1-\alpha$ for the \textsc{Weakly Aligned} generator paired with each evaluator for $K=100$. The mean is over $40$ iterations and shaded areas represent $95\%$ confidence intervals.}
    \label{fig:weakly-aligned-size-K100}
\end{figure}
\begin{figure}[h!]
    \centering
    \includegraphics[width=.8\linewidth]
{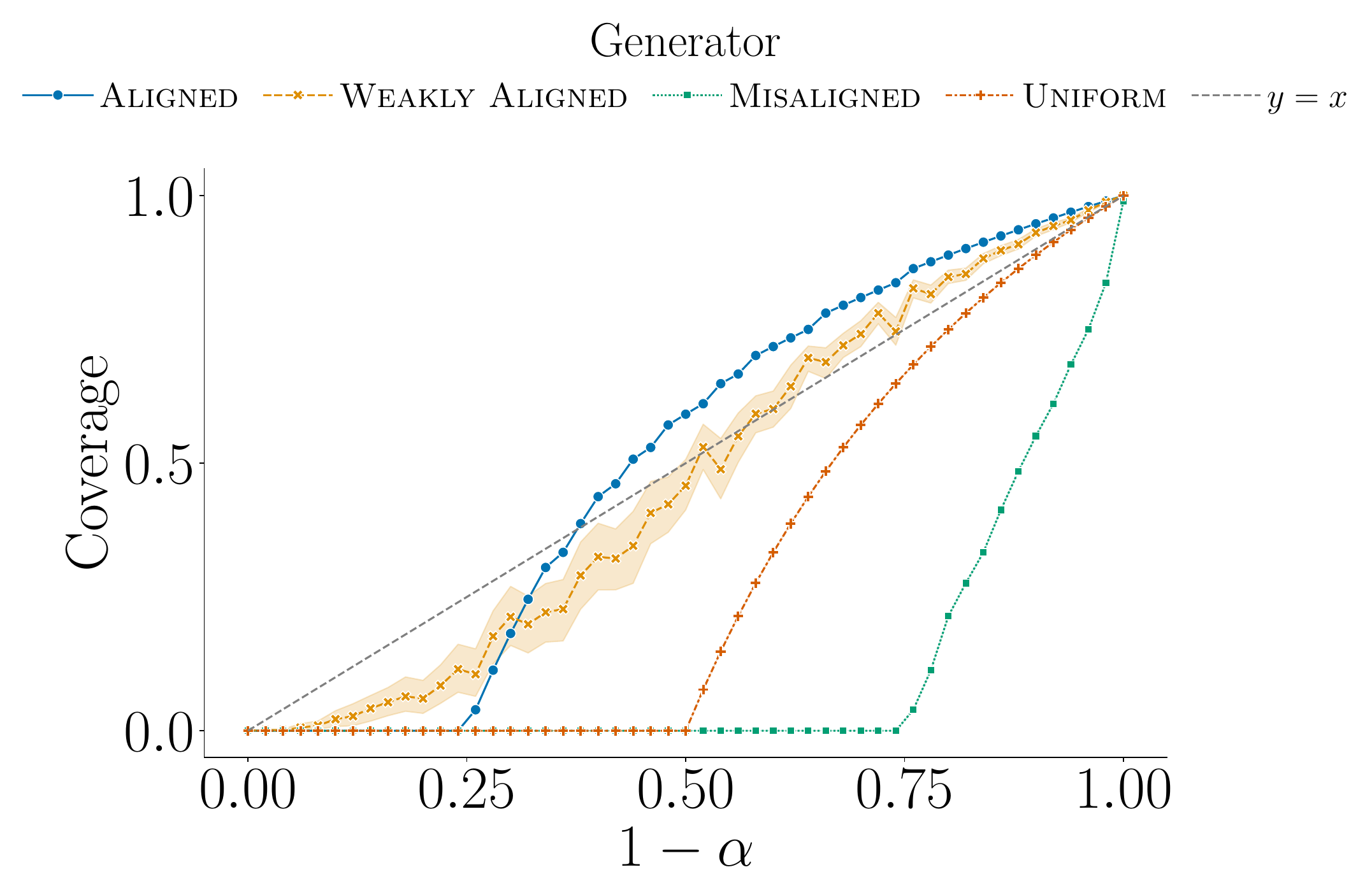}
    \caption{Portfolio mean coverage against the value of $1-\alpha$ for the evaluator with $\epsilon=1.0$ paired with each generator  for $K=100$. The mean is over $40$ iterations and shaded areas represent $95\%$ confidence intervals.}
    \label{fig:e-1.0-coverage-K100}
\end{figure}
\begin{figure}[th!]
    \centering
    \includegraphics[width=.76\linewidth]
    {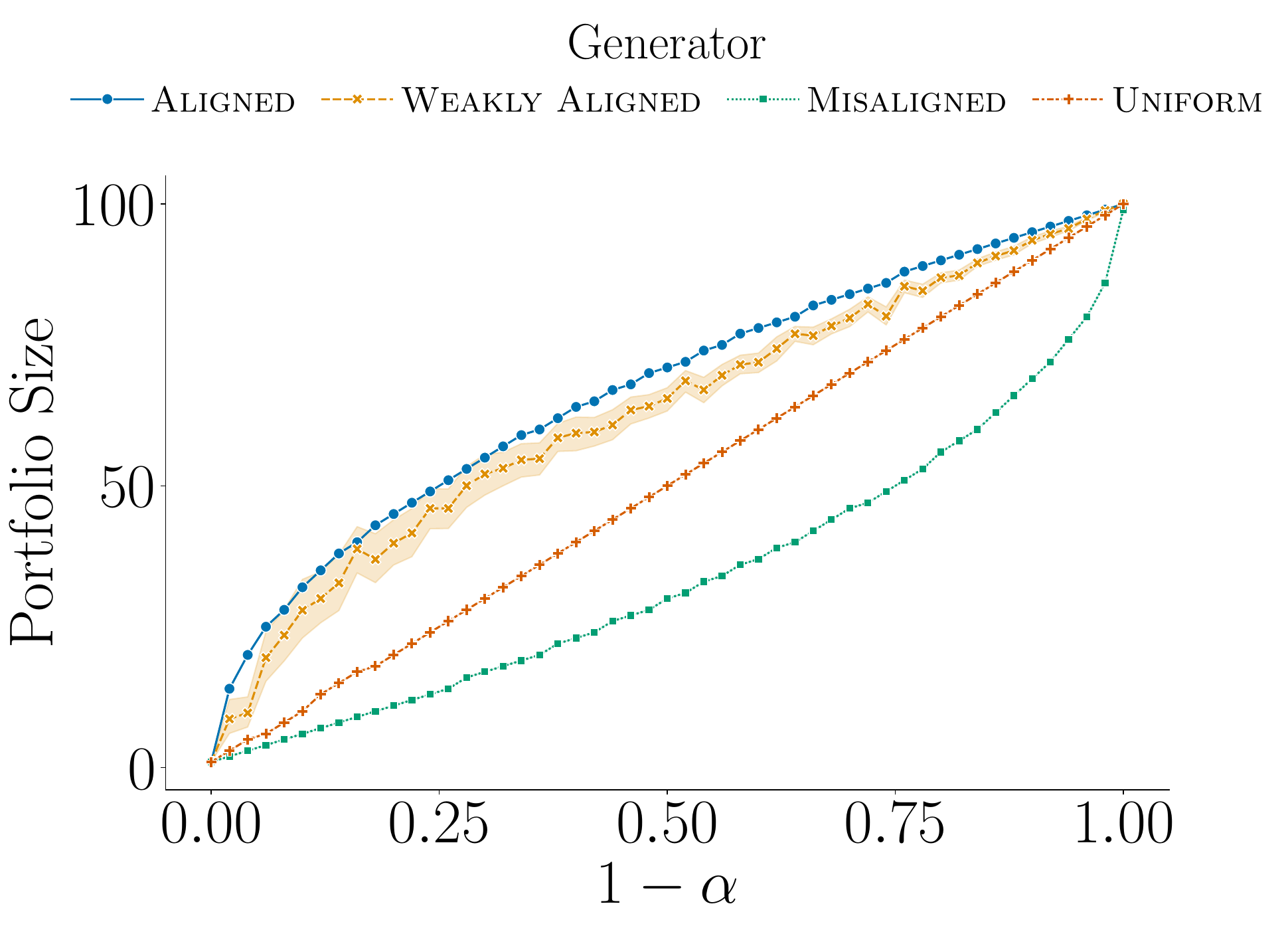}
    \caption{Portfolio mean size against the value of $1-\alpha$ for the evaluator with $\epsilon=1.0$ paired with each generator  for $K=100$. The mean is over $40$ iterations and shaded areas represent $95\%$ confidence intervals.}
    \label{fig:e-1.0-size-K100}
\end{figure}
\vspace{-1mm}

\xhdr{Results} Figure~\ref{fig:weakly-aligned-coverage-K100} and~\ref{fig:e-1.0-coverage-K100} show---in consistence with Proposition~\ref{prop:coverage-bound}--- that for $\alpha < 0.5$ our portfolio achieves  positive coverage even under the \textsc{Weakly Aligned} generator independently of the evaluator error $\epsilon$. In fact, Figure~\ref{fig:weakly-aligned-coverage-K100} shows, in practice, that coverage is lower bounded by $1-\alpha$---a tighter lower bound compared to Proposition~\ref{prop:coverage-bound}---as coverage is always above the diagonal for $\alpha < 0.5$. Further, Figure~\ref{fig:weakly-aligned-coverage-K100}--~\ref{fig:e-1.0-size-K100} reveal notable insights on how the level of human alignment controls the trade-off between portfolio coverage and its size; Figure~\ref{fig:weakly-aligned-coverage-K100} and~\ref{fig:weakly-aligned-size-K100} show that under a \textsc{Weakly Aligned} generator, the lower the evaluator error, the higher the coverage and the lower the portfolio size for the same value of $\alpha$; Figure~\ref{fig:e-1.0-coverage-K100} and~\ref{fig:e-1.0-size-K100} show that under the worst evaluator, the more aligned the generator, the higher the coverage, at the price though of larger portfolios.

\subsection{Portfolios of Optimization Models}
\label{subsec:real}
\begin{figure}[t]
    \centering
   \subfloat[LLM Evaluator]{\includegraphics[width=.7\linewidth]
    {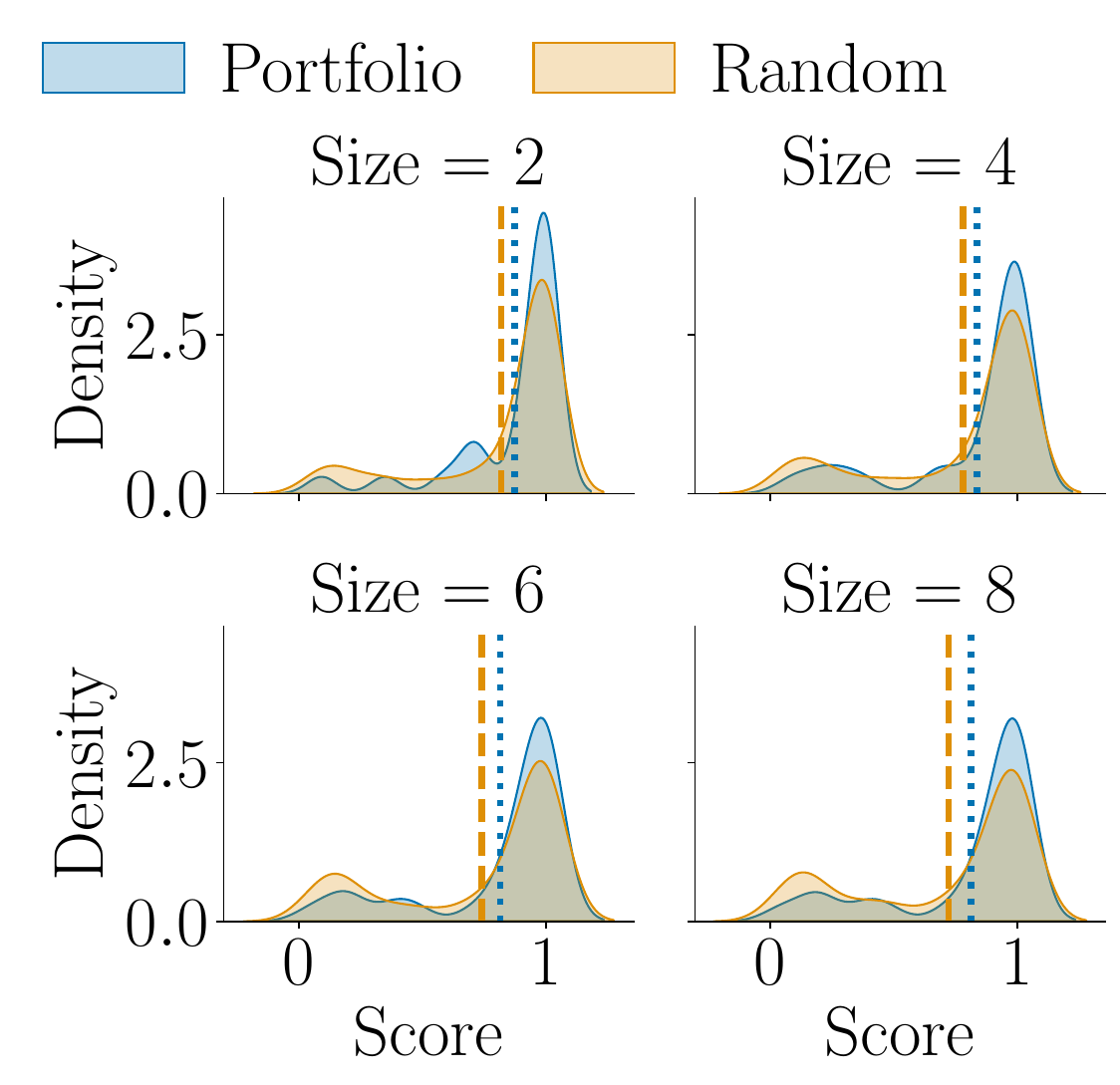}}\\
    \subfloat[Evaluator using the Generator Probabilities]{\includegraphics[width=.7\linewidth]{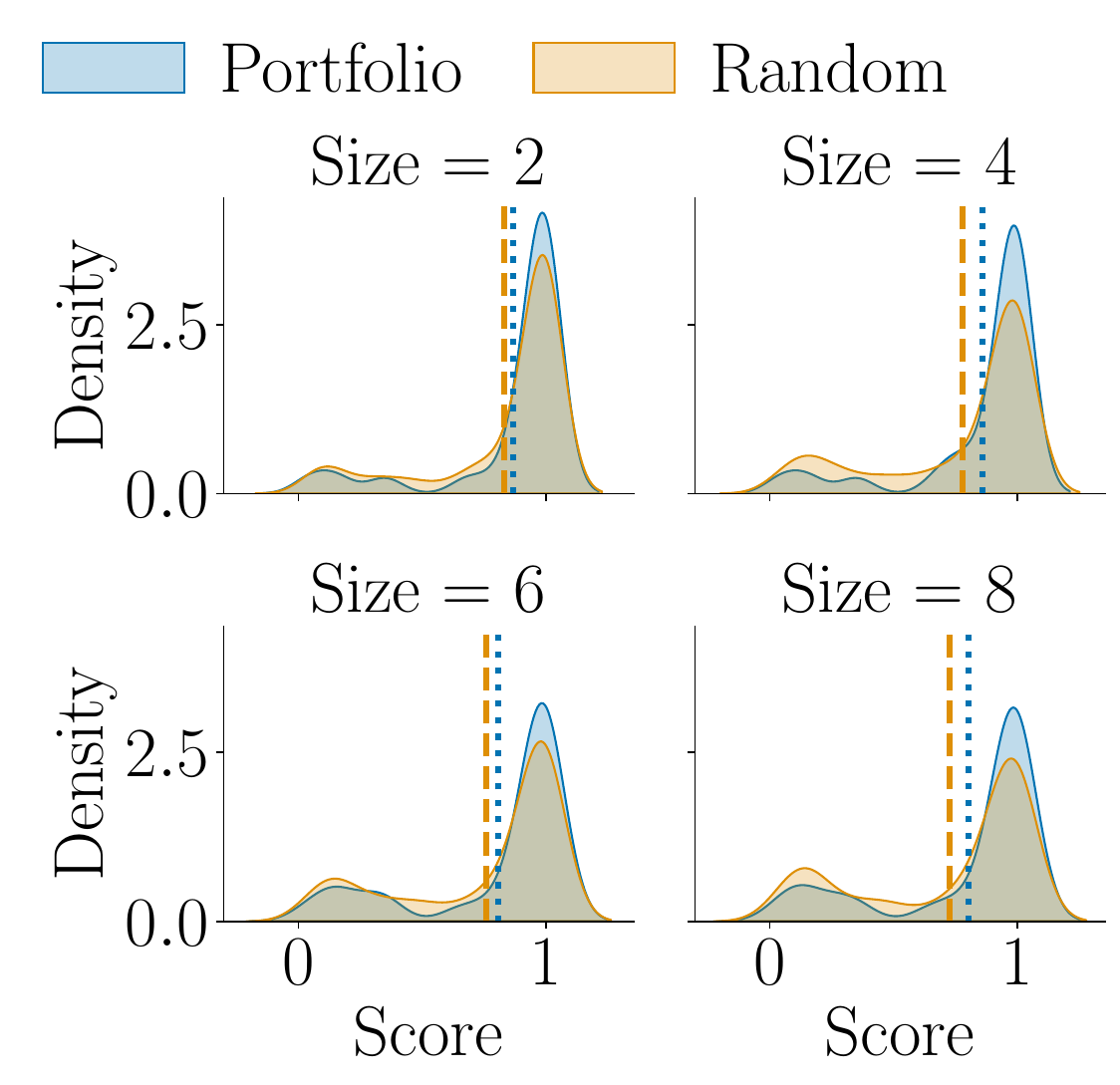}}
    \caption{Kde plot of scores assigned by \texttt{gpt-5.4}-as-a-judge to  portfolios of size $s \in \{2,4,6,8\}$ against $s$ randomly selected candidates for two evaluator types. The score distributions are over $25$ problems and $30$ samplings of the random candidates. Dashed and dotted lines represent the mean score value over the randomly chosen and the portfolio candidates respectively.}
    \label{fig:real}
\end{figure}

We construct portfolios of optimization models for $25$ optimization problems given in natural language from the dataset NL4LP~\citep{ahmaditeshnizi24optimus}. 
We use a strong LLM-as-a-judge to evaluate the quality of the candidate optimization models in our portfolios and compare it against portfolios constructed by random sampling (i.e., prompting multiple times).  
Refer to Appendix~\ref{app:details-real} for all LLM prompts and further details. 

\xhdr{Generator} For each optimization problem, we generate $50$ candidate optimization models with \texttt{gpt-5.4-nano}, where each model comprises a natural language description followed by the corresponding python code. 
For each candidate model $o$ we compute the generator probability $p(o)$ using the normalized log probabilities of the tokens of the text representation of the candidate model.

\xhdr{Evaluator} %
For each generated model, we execute the code and use the output---together with the model and problem---to assign a scalar score from 1 to 100 to each generated model using \texttt{gpt-5.4-nano} with repeated sampling. 
For each problem, we rank the generated models with the average score over $4$ samples and use this as the evaluator ranking.
Further, we implement an additional evaluator type, where we rank the generated models using the probability distribution induced by the generator---essentially using the generator as evaluator too.   

\xhdr{Results using LLM-as-a-judge} We use \texttt{gpt-5.4} as a judge under the  same scoring regime as for the evaluator for each generated model, except that in the scoring input prompt, we also add the ground truth solution of the optimization model as provided in the NL4LP dataset. 
As the final score of each generated model we consider the average normalized LLM-as-a-judge score over the repeated samplings. 
We characterize the quality of each portfolio with the minimum among the scores of the models it contains, assuming a worst-case scenario, where a human would select the worst model in the portfolio.   
%
%
Figure~\ref{fig:real} shows that our portfolio for both the evaluator types  consistently outperforms random portfolios under this worst-case scenario. Further, it also shows that our portfolios using the reasoning capabilities of the LLM evaluator achieve higher scores compared to portfolios using the generator probabilities. These results demonstrate in practice the competitive advantage of our approach on unifying the generative and reasoning capabilities of LLMs.


\section*{Acknowledgments}
Straitouri acknowledges support from a Google PhD Fellowship. Kim and Tambe acknowledge support by ONR MURI N00014-24-1-2742.



\section*{Impact Statement}


This paper presents work whose goal is to advance the field of Machine
Learning. There are many potential societal consequences of our work, none
which we feel must be specifically highlighted here.



\bibliography{robust-portfolio}
\bibliographystyle{icml2026}

\newpage
\appendix
\onecolumn
\section{Proofs}
\label{app:proofs}
\subsection{Proof of Corollary~\ref{cor:eval-robustness}}
\begin{proof}
    Let $\pi_{e}$ be the ranking policy of the aligned evaluator.
    By Definition~\ref{def:evaluator-alignment}, we have that $\pi_{e}(d) = \pi^*(d)$, thus $o_{(i)^e} = o_{(i)*}$ for any $i \in \NN$. 
    For any generator model and $\alpha \in (0,1)$ we use Eq.~\ref{eq:portfolio}, to construct a portfolio $\Pcal(d;\alpha) = \{ o_{(j)^e}\}_{i=1}^{k*(\alpha)}$, where $k^{*}(\alpha)$ is given by Eq.~\ref{eq:portfolio-size}. 
    The portfolio $\Pcal(d ; \alpha)$ has coverage 
    \begin{align*}
        c(\Pcal(d ; \alpha)) = c \left (\{ o_{(i)^e}\}_{i=1}^{k^{*}(\alpha)} \right ) = c \left (\{ o_{(i)^*}\}_{i=1}^{k^{*}(\alpha)} \right ) =  \frac{\sum_{i=1}^{k^{*}(\alpha)} \II \left \{o_{(i)^*} \in \{ o_{(i)^*}\}_{i=1}^{k^{*}(\alpha)} \right \}}{k^*(\alpha)} = 1.
    \end{align*}
\end{proof}

\subsection{Proof of Proposition~\ref{prop:coverage-bound}}
\begin{proof}
    With a slight abuse of notation, we will write $\Pcal$ to denote $\Pcal(d ; \alpha)$ for ease of exposition.
    Let $\Xcal \subseteq \Pcal$ such that $\Xcal = \Pcal \cap \Pcal^*$, where $\Pcal^*$ is a portfolio of  size $|\Pcal^*| = |\Pcal|$ with coverage $c(\Pcal^{*}) = 1$. For $\Xcal \subseteq \Pcal$ we have 
    \begin{align}~\label{eq:x-portfolio-rule}
        \sum_{o \in \Xcal}p(o) + \sum_{o' \in \Pcal \setminus \Xcal}p(o') \geq 1 - \alpha.
    \end{align}
    Since the generator model is human aligned, it must hold that $\forall o \in \Pcal\setminus \Xcal$ and $\forall o' \in \Pcal^{*}\setminus \Xcal$, $rank(o) \geq rank(o')$ given the human ranking $\pi^{*}(d)$ and as a result, $p(o) \leq p(o')$. Using this and the fact that $|\Pcal\setminus \Xcal| = |\Pcal^{*}\setminus \Xcal| = k^{*}(\alpha) - |\Xcal|$, Eq.~\ref{eq:x-portfolio-rule} becomes
    \begin{align}~\label{eq:x-f-star-alpha}
         \sum_{o \in \Xcal}p(o) + \sum_{o'' \in \Pcal^{*} \setminus \Xcal}p(o'') \geq \sum_{o \in \Xcal}p(o) + \sum_{o' \in \Pcal \setminus \Xcal}p(o') \geq 1 - \alpha.
    \end{align}
    
    We will now use that $\sum_{o'' \in \Pcal^{*} \setminus \Xcal}p(o'') \leq \alpha$. This is because $\Pcal^{*} \setminus \Xcal \subseteq \Ocal \setminus \Pcal$ and by Eq.~\ref{eq:x-portfolio-rule}
    \begin{align*}
        1 - \sum_{o \in \Ocal\setminus\Pcal}p(o) = 
        \sum_{o \in \Xcal}p(o) + \sum_{o' \in \Pcal \setminus \Xcal}p(o') \geq 1 - \alpha.
    \end{align*}
    As a result,
    \begin{align*}
        \sum_{o \in \Ocal\setminus\Pcal}p(o) \leq \alpha
    \end{align*}
    and since  $\Pcal^{*} \setminus \Xcal \subseteq \Ocal \setminus \Pcal$ it also holds 
    \begin{align*}
        \sum_{o'' \in \Pcal^{*} \setminus \Xcal}p(o'') \leq \sum_{o \in \Ocal\setminus\Pcal}p(o) \leq \alpha.
    \end{align*}
    Using the above and Eq.~\ref{eq:x-f-star-alpha}, we have 
    \begin{align*}
          \sum_{o \in \Xcal}p(o) + \alpha \geq \sum_{o \in \Xcal}p(o) + \sum_{o'' \in \Pcal^{*} \setminus \Xcal}p(o'') \geq 1 - \alpha
    \end{align*}
    and therefore
    \begin{align*}
        \sum_{o \in \Xcal}p(o) \geq 1 - 2\alpha.
    \end{align*}
    We will now use that $\sum_{o \in \Xcal}p(o) < |\Xcal| = k^{*}(\alpha) \cdot \frac{|\Xcal|}{k^{*}(\alpha)}$ to rewrite the above as 
    \begin{align*}
        k^{*}(\alpha) \cdot \frac{|\Xcal|}{k^{*}(\alpha)} > \sum_{o \in \Xcal}p(o) \geq 1 - 2\alpha.
    \end{align*}  
    Remember that $\Xcal = \Pcal \cap \Pcal^{*}$, so $c(\Pcal) = \frac{|\Xcal|}{k^{*}(\alpha)}$. Therefore the above becomes
    \begin{align*}
         k^{*}(\alpha) \cdot c(\Pcal) > 1 - 2\alpha
    \end{align*}
    and, thus
    \begin{align*}
        c(\Pcal) > \frac{1 - 2\alpha}{k^{*}(\alpha)}.
    \end{align*}
\end{proof}

\section{Additional Experimental Details}
\label{app:details}
\subsection{Simulated Portfolios}
\xhdr{Generator Types}
\label{app:deatils-simulation}
\begin{itemize}
\item \textsc{Aligned}. Each generated candidate $i \in [K]$ has probability $p(i) = \frac{K+1-i}{\sum_{i'=1}^K K+1 - i'} $; therefore $\forall i < j \in [K], p(i) > p(j)$ satisfying Definition~\ref{def:generator-alignment}.
\item \textsc{Weakly Aligned}.  Each generated candidate $i \in [K]$ has a randomly selected probability value such that $\sum_{i \in [K/2]} p(i) > \sum_{j \in [K/2]} p(K/2 + j)$, where we select $\sum_{i \in [K/2]} p(i) \in [0.5, 0.99)$ uniformly at random. Note that Proposition~\ref{prop:coverage-bound} holds for any portfolio of $K/2$ generation candidates constructed using this generator type.
\item \textsc{Uniform}. Each generated candidate $i \in [K]$ has probability $p(i) = 1/K$.
\item \textsc{Misaligned}. Each generated candidate $i \in [K]$ has probability $p(i) = \frac{i}{\sum_{i'=1}^K i'}$.
\end{itemize}

\subsection{Portfolios of Optimization Models}
\label{app:details-real}
We prompt all LLMs with temperature $1.0$, while setting verbosity to `\texttt{low}' and the reasoning effort to `\texttt{none}'. Below we provide the prompt templates and system prompts we used for the generator, evaluator and LLM-as-a-judge models. For the templates we follow the templates used by~\citet{huang2025orlm}. 

\clearpage
\newpage
\begin{figure}[t]
    \centering
    \includegraphics[width=\linewidth]{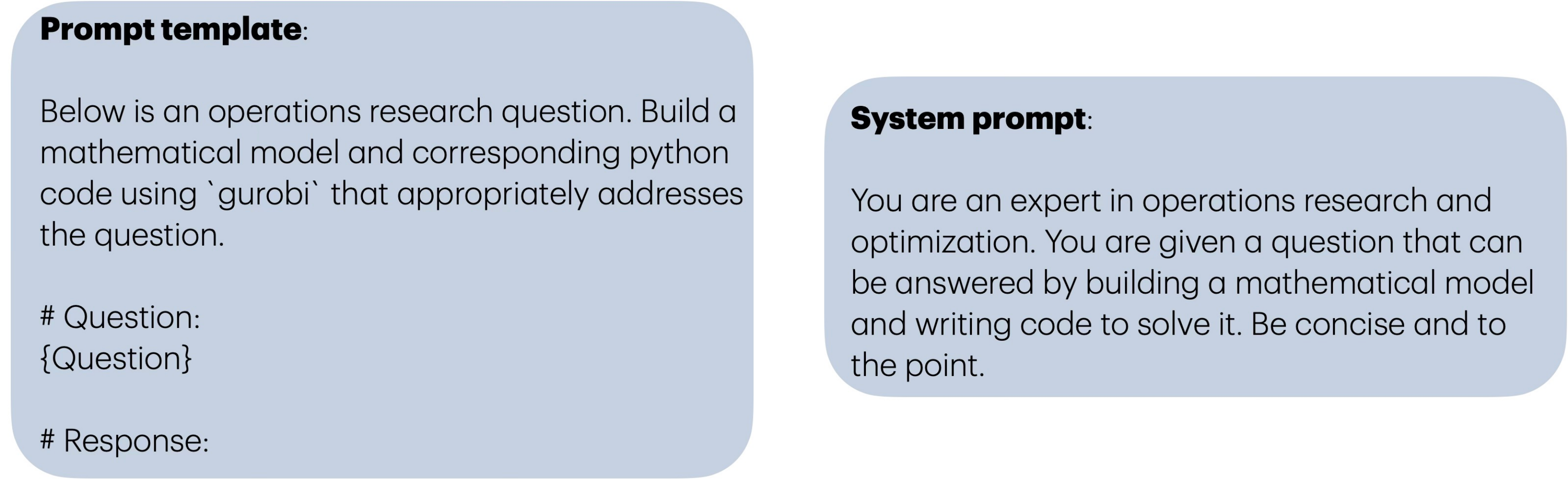}
    \caption{Generator prompt and system prompt.}
    \label{fig:generator-prompt}
\end{figure}

\begin{figure}[h!]
    \centering
    \includegraphics[width=\linewidth]{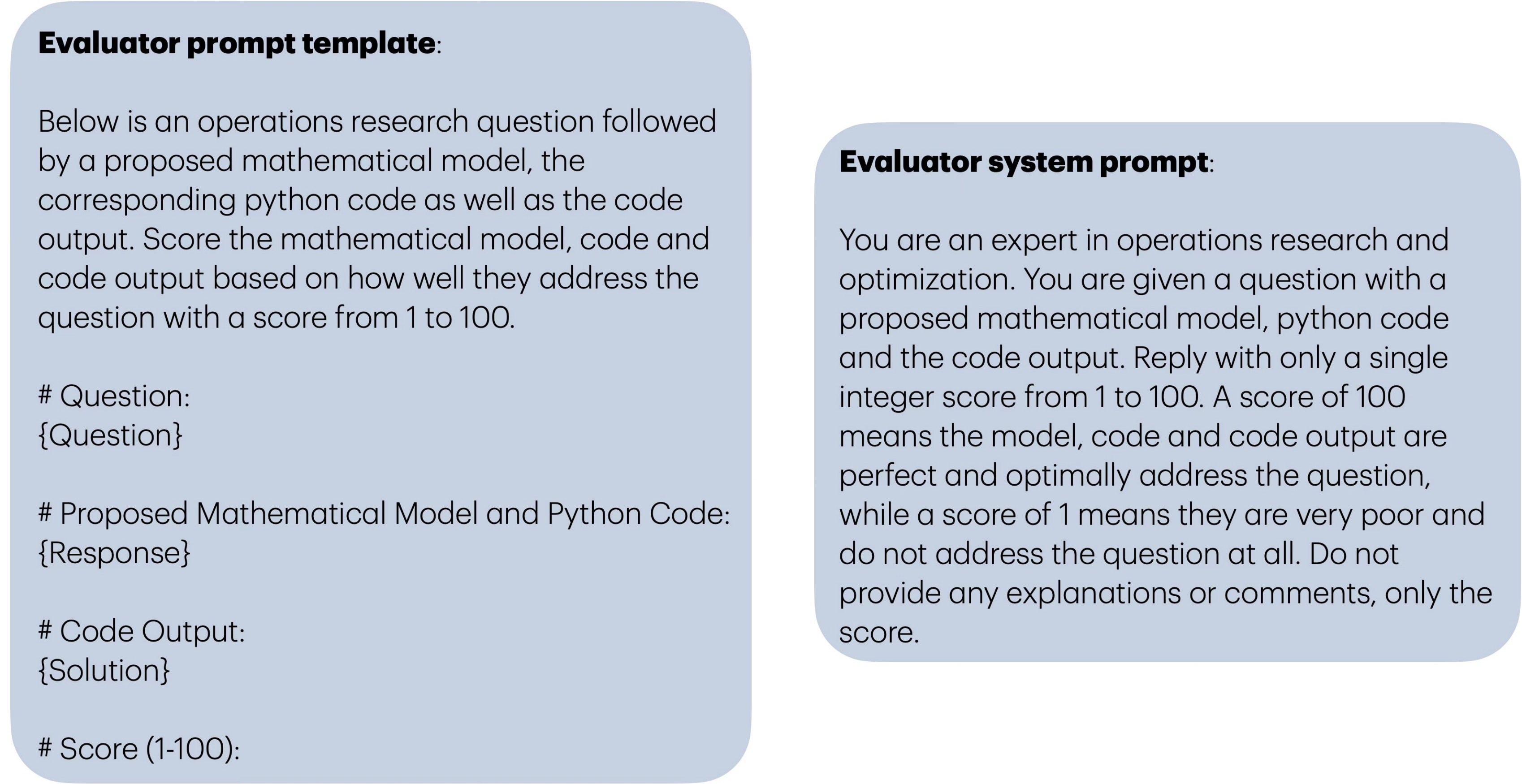}
    \caption{Evaluator prompt and system prompt.}
    \label{fig:evaluator-prompt}
\end{figure}
\begin{figure}[t]
    \centering
    \includegraphics[width=\linewidth]{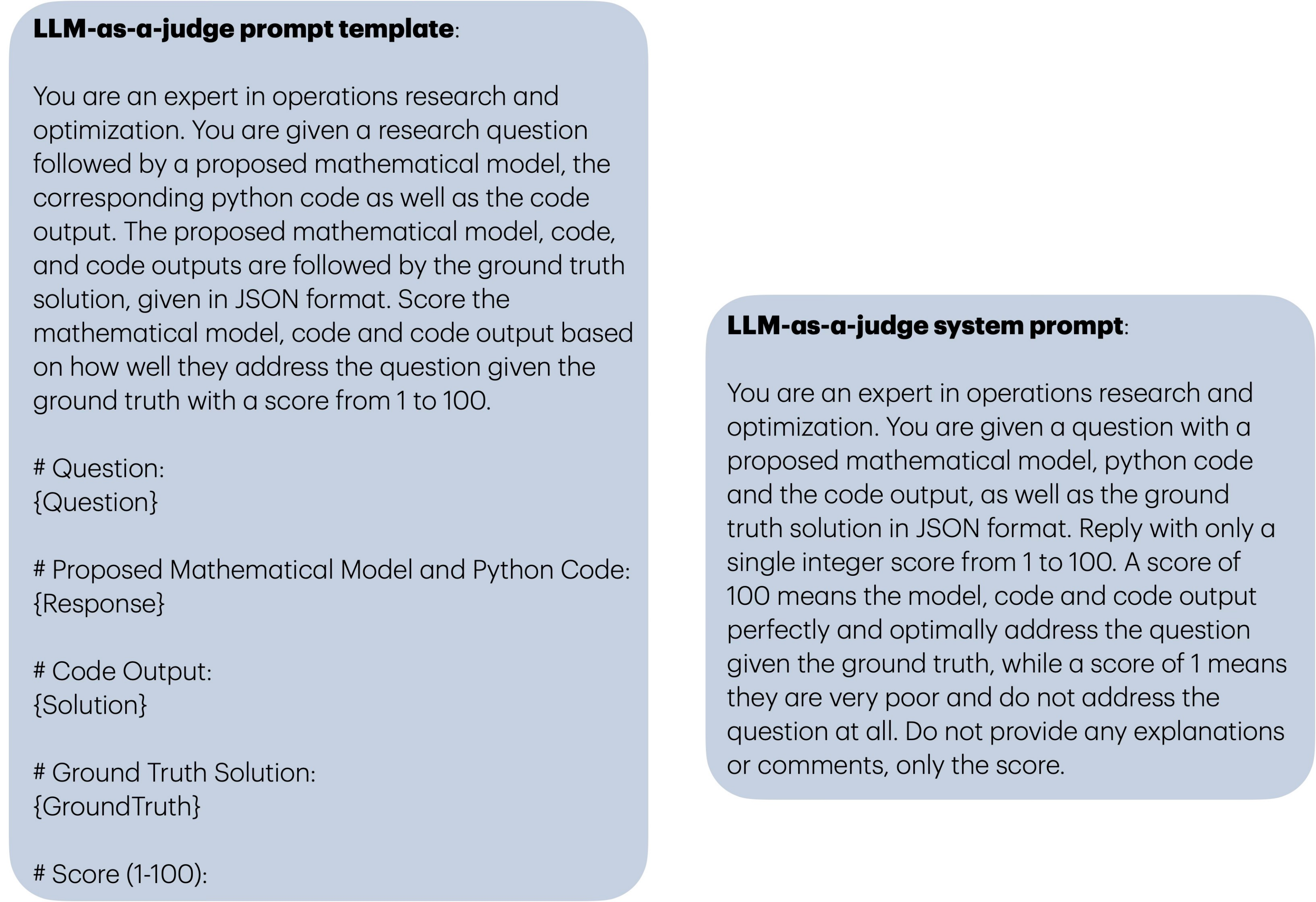}
    \caption{LLM-as-a-judge prompt and system prompt.}
    \label{fig:judge-prompt}
\end{figure}
\clearpage
\newpage
\section{Additional Experimental Results on Simulated Portfolios}
\label{app:synthetic}
Below we present results over several generator-evaluator pairs and values of $K$. 
\begin{figure}[h!]
    \centering
    \includegraphics[width=\linewidth]{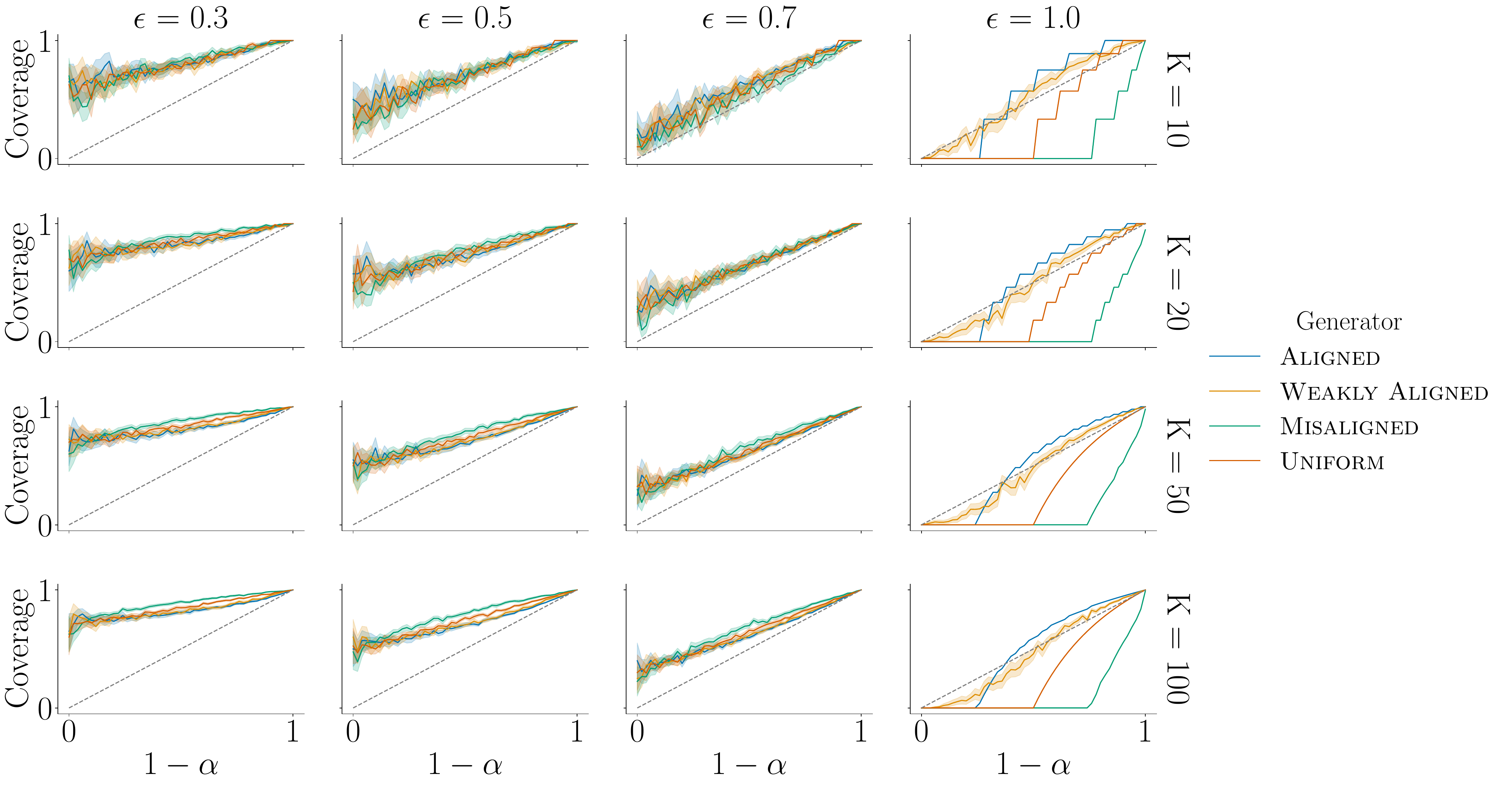}
    \caption{Portfolio coverage against $1-\alpha$ for several generator evaluator pairs and value of $K$. The mean is over $40$ iterations and shaded areas represent $95\%$ confidence intervals.}
    \label{fig:grid-coverage}
\end{figure}

\begin{figure}[h!]
    \centering
    \includegraphics[width=\linewidth]{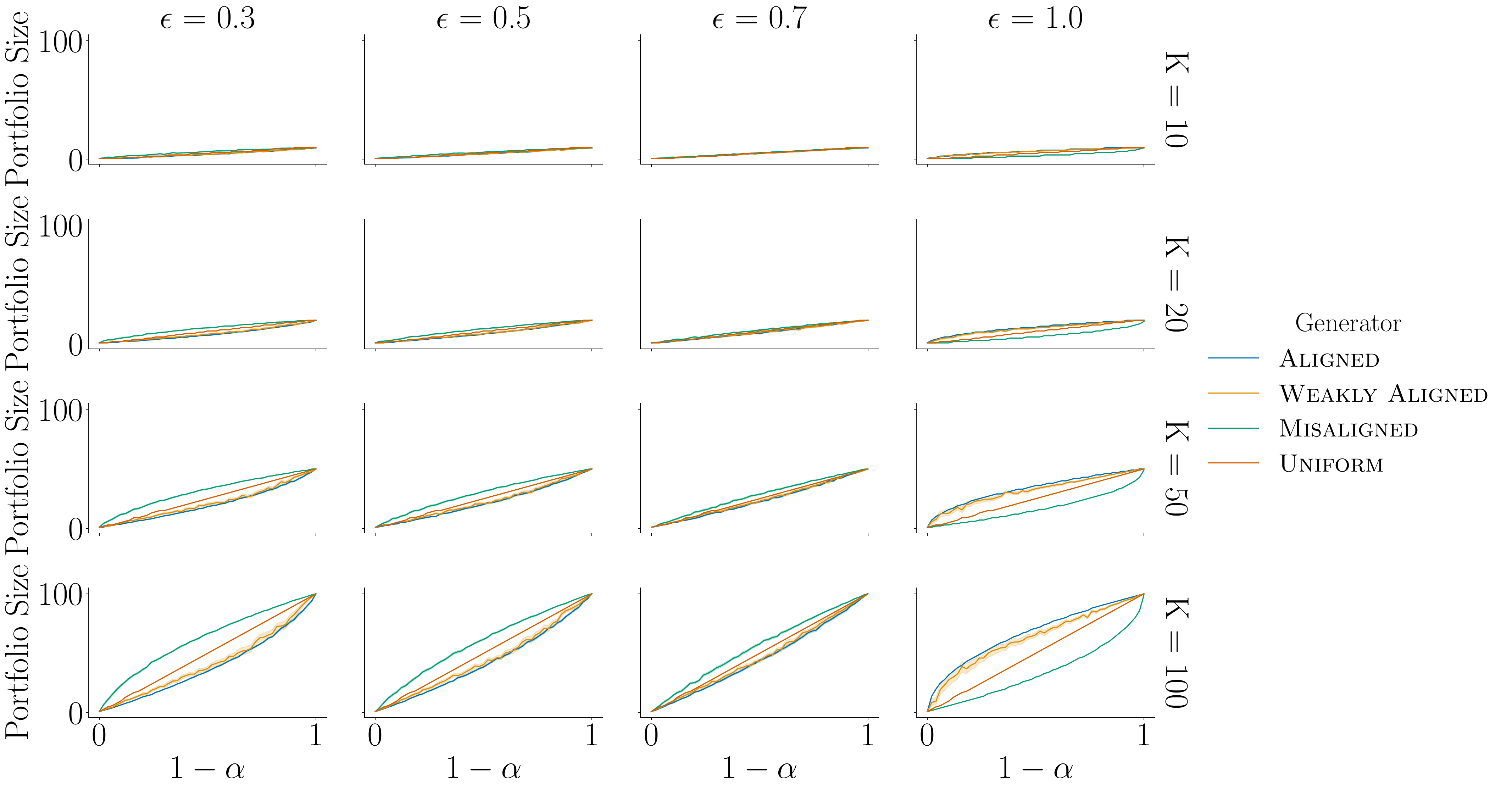}
    \caption{Portfolio size against $1-\alpha$ for several generator evaluator pairs and value of $K$. The mean is over $40$ iterations and shaded areas represent $95\%$ confidence intervals.}
    \label{fig:grid-size}
\end{figure}


\end{document}